\pgfplotsset{compat=newest}
\newtheorem{proposition}{Proposition}
\newtheorem{definition}{Definition}
\newtheorem{remark}{Remark}
\newcounter{experiment}[section]
\newenvironment{experiment}[1][]{\refstepcounter{experiment}\par
   \noindent \textbf{Experiment \theexperiment: #1} \rmfamily}{\par}
\def\BibTeX{{\rm B\kern-.05em{\sc i\kern-.025em b}\kern-.08em
    T\kern-.1667em\lower.7ex\hbox{E}\kern-.125emX}}
\begin{document}
\title{Fault Detection via Occupation Kernel Principal Component Analysis}
\author{Zachary Morrison, Benjamin P. Russo, Yingzhao Lian, and Rushikesh Kamalapurkar
\thanks{This research was supported by the Air Force Office of Scientific Research (AFOSR) under contract number FA9550-20-1-0127, the Swiss National Science Foundation (SNSF) under the NCCR Automation project, grant agreement 51NF40\textunderscore180545, and the National Science Foundation (NSF) under awards number 2027999. Any opinions, findings and conclusions or recommendations expressed in this material are those of the author(s) and do not necessarily reflect the views of the sponsoring agencies.\\
This manuscript has been authored by UT-Battelle, LLC, under contract DE-AC05-00OR22725 with the US Department of Energy (DOE). The US government retains and the publisher, by accepting the article for publication, acknowledges that the US government retains a nonexclusive, paid-up, irrevocable, worldwide license to publish or reproduce the published form of this manuscript, or allow others to do so, for US government purposes. DOE will provide public access to these results of federally sponsored research in accordance with the DOE Public Access Plan (\href{http://energy.gov/downloads/doe-public-access-plan}{http://energy.gov/downloads/doe-public-access-plan})}
\thanks{Z. Morrison and R. Kamalapurkar are with the School of Mechanical and Aerospace Engineering, Oklahoma State University, Stillwater, OK, 74074, United States of America (e-mail: {zachmor,rushikesh.kamalapurkar}@okstate.edu).}
\thanks{B. P. Russo is with the Computer Science and Mathematics Division, Oak Ridge National Laboratory, Oak Ridge, TN 37831, United States of America, (e-mail: russobp@ornl.gov) }
\thanks{Y. Lian is with the Automatic Laboratory, Ecole Polytechnique Federale de Lausanne, 1015 Lausanne, Switzerland. (e-mail: yingzhao.lian@epfl.ch).}}

\maketitle

\begin{abstract}
Reliable operation of automatic systems is heavily dependent on the ability to detect faults in the underlying dynamics. While traditional model-based methods have been widely used for fault detection, data-driven approaches have garnered increasing attention due to their ease of deployment and minimal need for expert knowledge. In this paper, we present a novel principal component analysis (PCA) method that uses occupation kernels. Occupation kernels result in feature maps that are tailored to the measured data, have inherent noise-robustness due to the use of integration, and can utilize irregularly sampled system trajectories of variable lengths for PCA. The occupation kernel PCA method is used to develop a reconstruction error approach to fault detection and its efficacy is validated using numerical simulations.
\end{abstract}

\section{Introduction}
\label{sec:Introduction}
Fault detection methods for dynamical systems rely on the identification of anomalous behavior using measured data. Applications of fault detection range from healthcare \cite{SCC.Wang.Zhan.ea2016}; manufacturing \cite{SCC.Wang.Yao2015, SCC.Qin2009}; monitoring sensor behavior \cite{SCC.Pan.Badawi.ea2022, SCC.Navi.Davoodi.ea2015}; monitoring chemical processes \cite{SCC.Choi.Lee.ea2005, SCC.Mansouri.Nounou.ea2016}; identifying the onset of nonlinear behavior in dynamical systems \cite{SCC.Nguyen.Golinval2010}; and identifying traffic anomalies \cite{SCC.Piciarelli.Micheloni.ea2008}. A multitude of approaches to fault detection have been studied over the past few decades, such as data-driven, set-based, observer-based, and time-series analysis methods \cite{SCC.Mu.Yang.ea2022, SCC.Qin2009,SCC.Patton.Chen1997,SCC.Odgaard.Stoustrup2015}. Set-based methods accomplish fault detection by computing a forward reachable set and checking if the system state at the next time step is inside that set\cite{SCC.Mu.Yang.ea2022}. State estimation techniques such as the extended Kalman filter (EKF) and the Leunberger observer have been successfully implemented for fault detection in industrial processes \cite{SCC.Fathi.Ramirez.ea1993, SCC.Patton.Chen1997}. Set-based and observer-based methods are model-based, whereas this paper focuses on data-driven fault detection \cite{SCC.Qin2009}.

Data-driven fault detection methods, such as principal component analysis (PCA), kernelized principal component analysis (KPCA), and the Kahrunen-Loeve transform (KLT) \cite{SCC.Kirlin1999}, typically employ multivariate statistical procedures combined with an index, such as a reconstruction error, Hotteling's $T^{2}$, a squared prediction error (SPE), or a combination thereof, to detect anomalies \cite{SCC.Hoffmann2007, SCC.Mu.Yang.ea2022, SCC.Qin2009}. The KLT utilizes the expansion of a random variable as a linear combination of eigenfunctions of the covariance operator for fault detection \cite{SCC.Odgaard.Stoustrup2015}. In finite dimensions and in the context of data driven methods or discrete sampling, the KLT is simply PCA. 

PCA and KPCA methods extract the principal components of a dynamical system. Specifically, principal component analysis diagonalizes the covariance matrix associated with the fault-free training data. The dominant eigenvectors of the covariance matrix are then used to effectively reduce the dimension of the reconstruction problem by considering only these principal components for reconstruction. A key limitation of PCA is that it fails to capture nonlinearities in the data. Kernelized PCA remedies this limitation by lifting the data to a higher-dimensional feature space via a (nonlinear) feature mapping \cite{SCC.Schoelkopf.Smola.ea1998}. PCA is then applied in the feature space, resulting in non-linear principal components \cite{SCC.Schoelkopf.Smola.ea1998}. Fault detection using KPCA / PCA relies on computation of a metric ($T^2$, SPE, etc.) that measures how well new data can be reconstructed using the principal components \cite{SCC.Wang.Zhan.ea2016,SCC.Pan.Badawi.ea2022,SCC.Choi.Lee.ea2005,SCC.Wang.Yao2015,SCC.Nguyen.Golinval2010,SCC.Navi.Davoodi.ea2015,SCC.Mansouri.Nounou.ea2016,SCC.Hoffmann2007}.

% {\color{red}Other methods have used trajectories as data, but in practice the trajectories are sub sampled a given number of times and normal KPCA is carried out using the vector representation of each trajectory \cite{SCC.Piciarelli.Micheloni.ea2008} RK: Is this correct?}. 

The feature maps used for KPCA are typically the canonical feature maps associated with generic kernel functions such as the Gaussian radial basis function \cite{SCC.Hoffmann2007}. As such, the feature maps are largely independent of the system or the measured data. In this paper, a new PCA framework is developed where the feature maps are also derived from the training data. The idea, motivated by results such as \cite{SCC.Rosenfeld.Kamalapurkar.ea2019a}, is to use trajectories generated by a dynamical system as a fundamental unit of data by embedding them in a reproducing kernel Hilbert space (RKHS) using the so-called occupation kernels.

The resulting PCA method, called occupation kernel PCA (OKPCA), is expected to perform better owing to the use of feature maps that are adapted to the data. In addition the computations required to implement OKPCA rely exclusively on integrals of kernel functions evaluated along system trajectories. As a result, OKPCA is endowed with intrinsic robustness to zero-mean noise and can be implemented on data sets containing variable length trajectories that are irregularly sampled \cite{SCC.Rosenfeld.Kamalapurkar.ea2019a}. Fault detection then proceeds by reconstructing a given trajectory as a linear combination of eigenfunctions of a suitably defined kernelized covariance operator and computing a suitable analog of the reconstruction error used for KPCA by Hoffman \cite{SCC.Hoffmann2007}. 

This paper is organized as follows: Section \ref{Background} establishes the mathematical background of KPCA. Section \ref{OKPCA} outlines the OKPCA method for fault detection. In section \ref{exp}  OKPCA is applied to detect faulty trajectories generated by an academic example and a quadrotor in two numerical experiments. Section \ref{concl} concludes the paper. 

\section{Background} \label{Background}
In this section, a brief overview of current PCA methods is provided for completeness. 
\subsection{Principal Component Analysis}
Given a set of $M$ \emph{centered} observations $\{x_j\in \mathbb{R}^n\}_{j=1}^M\subset X \subseteq \mathbb{R}^n$, where ``centered" indicates that $\sum_{j=1}^M x_j = 0$, the principal component analysis (PCA) procedure diagonalizes the covariance matrix $C$ defined by $C = \frac{1}{M}\sum_{j=1}^M x_{j}x_j^\top$ where $C$ is at most rank $M$ (if all observation vectors are linearly independent). Since $C$ is a positive semi-definite matrix, it is diagonalizable and has non-negative eigenvalues. The eigenvectors of $C$ are referred to as the principal components, typically ordered in a decreasing sequence of the corresponding eigenvalues. Given a vector $v\in \mathbb{R}^n$ we note that $Cv = \frac{1}{M}\sum_{j=1}^M \langle x_j, v\rangle x_j$, where $\langle \cdot, \cdot \rangle$ indicates the standard dot product. In particular, if $v$ is an eigenvector of $C$ with eigenvalue $\lambda$, we have $Cv = \lambda v = \frac{1}{M}\sum_{j=1}^M \langle x_j, v\rangle x_j$, which implies that all eigenvectors of $C$ lie in the span of $\{x_j\}_{j=1}^M$.
\subsection{Kernelized Principal Component Analysis}
Kernelized principal component analysis (KPCA) extends the PCA procedure to produce \emph{nonlinear} principal components. This is done by embedding the data into a reproducing kernel Hilbert space (RKHS) via a feature mapping $\Phi:X\subseteq \mathbb{R}^n \rightarrow H$.
\begin{definition}
Let $X$ be a non-empty set. A function $k:X\times X\rightarrow \mathbb{R}$ is called a \emph{kernel function on $X$} if there exists a $\mathbb{R}$-Hilbert space $H$ and a map $\Phi:X\rightarrow H$ such that for all $x,x'\in X$ we have $k(x,x')=\langle \Phi(x'),\Phi(x)\rangle_H$. We call $\Phi$ a feature map and $H$ a feature space of $k$.
\end{definition}
In other words, the data point $x$ is replaced by a element $\Phi(x)$ in the Hilbert space $H$ and the dot product is replaced by an inner-product over the Hilbert space. It should be noted that the choice of feature map is \emph{not} unique, however, the Moore-Aronszajn theorem guarantees the existence of a unique RKHS corresponding to $k$ and a canonical feature map that maps into that RKHS in the case where $k$ is a positive semidefinite kernel.
\begin{definition}
A RKHS, $H$, over a set $X$ is a Hilbert space of real valued functions over the set $X$ such that for all $x \in X$ the evaluation functional, $E_x: H \to \mathbb{R}$, given as $E_x g := g(x)$ is bounded.
\end{definition}
The Riesz representation theorem guarantees, for all $x \in X$, the existence of a unique function $k_x \in H$ such that $\langle g, k_x \rangle_H = g(x)$, where $\langle \cdot, \cdot \rangle_H$ is the inner product for $H$ \cite[Chapter 1]{SCC.Paulsen.Raghupathi2016}. The function $k_x$ is called the \emph{reproducing kernel at $x$}, the function $k(x,y) = \langle k_y, k_x \rangle_H$ is called the \emph{kernel function corresponding to $H$} and the mapping $\Phi: X \rightarrow H$ given by $x\mapsto k(\cdot,x)=\Phi(x)$, is called the \emph{canonical feature map}. 

In this setting we can now define \emph{nonlinear} principal components via analogous constructions. Given a feature map $\Phi:X\subseteq \mathbb{R}^n\rightarrow H$ and a set of data $\{x_j\}_{j=1}^M$ centered in $H$, i.e. $\sum_{j=1}^M\Phi(x_j) = 0$, the kernelized covariance operator $C:H\rightarrow H$ is defined as $C = \frac{1}{M}\sum_{j=1}^M[\Phi(x_j)\otimes\Phi(x_j)]$,
where, the notation $[u\otimes v]$, for $u,v\in H$, denotes the rank one operator defined by $[u\otimes v]h = \langle h,v\rangle u$ for $h\in H$. 

It is worth noting that $C$ is a finite rank and positive semi-definite operator and thus diagonalizable. If $v$ is an eigenfunction of $C$ then automatically $v\in \text{span}\{\Phi(x_j) : j=1,\ldots, M\}$ and $v=\sum_{j=1}^M \alpha_j \Phi(x_j)$ for $\alpha_j\in \mathbb{R}$. The coefficients $\alpha_i$ can be computed quite easily by solving a matrix equation, indeed for an eigenfunction $v\in H$, $\langle\Phi(x_k), Cv\rangle_H = \langle\Phi(x_k), \lambda v\rangle_H$, which, along with
\[
    \langle\Phi(x_k), \lambda v\rangle_H = \lambda \sum_{i=1}^M \langle \Phi(x_k), \alpha_{i}\Phi(x_i) \rangle,
\]
implies by definition of $\otimes$ that
\begin{equation}\label{eq:matrixFormFeatureMap}
    \langle\Phi(x_k), Cv\rangle_H = \sum_{i,j=1}^M\frac{\alpha_i \langle \Phi(x_j), \Phi(x_i)\rangle_H \langle \Phi(x_k), \Phi(x_j)\rangle_H}{M}.
\end{equation}
If we define $\alpha = (\alpha_1,\ldots, \alpha_M)^\top$, $k(x_i,x_j)=\langle \Phi(x_i), \Phi(x_j)\rangle_H$ and $K= \left(k(x_i,x_j)\right)_{i,j=1}^{M}$, equation \eqref{eq:matrixFormFeatureMap} can be expressed in the matrix form
\begin{equation}\label{eq:matrixFormCoefficients}
    M\lambda K\alpha = K^2\alpha.
\end{equation}
Since $K$ is a positive semi-definite matrix, it is sufficient to solve the equation $ K\alpha = \lambda M \alpha $ to recover all the solutions to \eqref{eq:matrixFormCoefficients}. In other words, the vector of coefficients $\alpha$ is a normalized eigenvector of the matrix $K$.

Let $\alpha^{(1)},\ldots,\alpha^{(N)}$, for $0 < N \leq M$, be a set of eigenvectors of $K$, corresponding to nonzero eigenvalues $0<\lambda_1\leq \ldots, \leq \lambda_N$, normalized such that for $k=1,\ldots, N$, $\lambda_k\langle \alpha^{(k)}, \alpha^{(k)} \rangle_{\mathbb{R}^n} = 1$. The $k-$th eigenfunction $v^{(k)}$ of $C$ can then be expressed as $ v^{(k)}=\sum_{i=1}^M \alpha^{(k)}_i \Phi(x_i)\in H$.
\begin{definition}
 Given a test point $x\in X$, we call $\langle v^{(k)}, \Phi(x)\rangle_H$, where $v^{(k)}$ is an eigenfunction of $C$, a \emph{nonlinear principal component of $\{x_j\}_{j=1}^M$ at $x$ corresponding to $\Phi$}. 
\end{definition}
\begin{remark}
If the data used for PCA are \emph{uncentered} in $H$, they can be centered by replacing $K$ with \[\tilde{K}=K-J_MK - KJ_M +J_MKJ_M,\] where $(J_M)_{i,j}=\frac{1}{M}$.
\end{remark}
\section{Occupation Kernel PCA} \label{OKPCA}
In this section, we will appropriately modify KPCA to incorporate trajectories as a fundamental unit of data. To do so will require an embedding of trajectories into a RKHS. The embedding will be achieved by using the novel occupation kernels developed in \cite{SCC.Rosenfeld.Kamalapurkar.ea2019a}, and the resulting technique will be called OKPCA.

\begin{definition}\label{def:occ}
Let $X \subset \mathbb{R}^n$ be compact, $H$ be a RKHS of real-valued continuous functions over $X$, and $\gamma\in C([0,T],X) $ be a trajectory, where $C([0,T],X)$ denotes the set of continuous functions from $[0,T]$ to $X$. The functional $g \mapsto \int_0^T g(\gamma(\tau)) \mathrm{d}\tau$ is bounded, and may be represented as $\int_0^T g(\gamma(\tau)) \mathrm{d}\tau = \langle g, \Gamma_{\gamma}\rangle_H,$ for some $\Gamma_{\gamma} \in H$ by the Riesz representation theorem. The function $\Gamma_{\gamma}$ is called the \emph{occupation kernel corresponding to $\gamma$ in $H$} \cite{SCC.Rosenfeld.Kamalapurkar.ea2019a}.
\end{definition}
The occupation kernel corresponding to a trajectory can be shown to be the integral of a kernel function along the trajectory.
\begin{proposition}\label{prop:integral-rep}
\cite{SCC.Rosenfeld.Kamalapurkar.ea2019a} Let $H$ be a RKHS of real-valued continuous functions over a set $X$ and let $\gamma : [0,T] \to X$ be a continuous trajectory as in Definition \ref{def:occ}. The occupation kernel corresponding to $\gamma$ in $H$, $\Gamma_{\gamma}$, may be expressed as 
\begin{equation}\label{eq:integral-rep}\Gamma_\gamma(x) = \int_0^T K(x,\gamma(t)) \mathrm{d}t.\end{equation}
\end{proposition}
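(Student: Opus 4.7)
The plan is to derive the integral expression for $\Gamma_\gamma(x)$ by a direct two-step application of the reproducing property of $H$ together with the defining Riesz-theorem identity for $\Gamma_\gamma$ given in Definition \ref{def:occ}. Existence of $\Gamma_\gamma$ as an element of $H$ is already granted by Definition \ref{def:occ}; the proposition only asks for its pointwise values.

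First I would fix an arbitrary $x \in X$ and apply the reproducing property to the function $\Gamma_\gamma$ evaluated at $x$, writing $\Gamma_\gamma(x) = \langle \Gamma_\gamma, K(\cdot,x)\rangle_H$, where $K(\cdot,x)$ is the reproducing kernel at $x$ and lies in $H$. Second, I would turn around and apply the defining identity for $\Gamma_\gamma$ from Definition \ref{def:occ} with the particular choice of test function $g = K(\cdot,x) \in H$, which yields $\langle K(\cdot,x), \Gamma_\gamma\rangle_H = \int_0^T K(\gamma(t),x)\,\mathrm{d}t$. Symmetry of the inner product (since $H$ is real) together with symmetry of the kernel, $K(\gamma(t),x) = K(x,\gamma(t))$, which is immediate from $K(y,z) = \langle k_z, k_y\rangle_H$, combines these two identities into the claimed formula \eqref{eq:integral-rep}.

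The only technical point worth verifying is that the integrand $t \mapsto K(x,\gamma(t))$ is genuinely integrable so that the right-hand side of \eqref{eq:integral-rep} is well-defined. This is not really an obstacle: the function $K(x,\cdot)$ belongs to $H$ and $H$ consists of continuous functions on $X$ by hypothesis, so $K(x,\cdot)$ is continuous on $X$; composing with the continuous curve $\gamma$ yields a continuous, hence Riemann-integrable, function on the compact interval $[0,T]$. Thus the only real content of the proof is the interchange of the roles of $\Gamma_\gamma$ and $K(\cdot,x)$ under the inner product, a move that is sanctioned precisely because $\Gamma_\gamma$ lives in the same RKHS where both the reproducing property and the Riesz identity hold.
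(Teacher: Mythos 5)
Your argument is correct and is essentially identical to the paper's proof: both apply the reproducing property to write $\Gamma_\gamma(x) = \langle \Gamma_\gamma, K(\cdot,x)\rangle_H$, then invoke the defining Riesz identity with $g = K(\cdot,x)$ together with symmetry of the (real) inner product and of the kernel. Your added remark on integrability of $t \mapsto K(x,\gamma(t))$ is a sensible extra detail the paper leaves implicit, but the core route is the same.
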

\begin{proof}
Note that $\Gamma_\gamma(x) = \langle \Gamma_\gamma, K(\cdot,x)\rangle_H$, by the reproducing property of $K$. Consequently,
\begin{align*}
    \Gamma_\gamma(x) &= \langle \Gamma_\gamma, K(\cdot,x)\rangle_H
    = \langle K(\cdot,x), \Gamma_\gamma \rangle_H\\
     &= \int_0^T K(\gamma(t),x)\, \mathrm{d}t
    = \int_0^T K(x,\gamma(t))\, \mathrm{d}t,
\end{align*}
which establishes the result.
\end{proof}
A kernelized covariance operator can now be defined for a set of trajectories.
\begin{definition}
Let $H$ be a Hilbert space, $\Gamma=\{\gamma_i:[0,T]\rightarrow X\}_{i=1}^M$ be a finite set of trajectories and  $\Phi:C([0,T],X)\rightarrow H$ be a feature mapping taking trajectories into $H$. With $\{\Phi(\gamma_j) :j=1,\ldots M\}$ centered in $H$, i.e. $\sum_{j=1}^M\Phi(\gamma_j)=0$, define the kernelized covariance operator as \[ C_{\Gamma}=\frac{1}{M}\sum_{j=1}^M[\Phi(\gamma_j)\otimes \Phi(\gamma_j)].\]

\end{definition}

Similar to the kernelized covariance operator $C$ above, $C_\Gamma$ is a positive semidefinite finite rank operator and as a result, admits eigenfunctions of the form $v^{(k)}=\sum_{i=1}^M\alpha_i^{(k)}\Phi(\gamma_i)$.  The notion of nonlinear principal components then extends naturally to Hilbert spaces.
\begin{definition}
Given a test trajectory $\gamma:[0,T]\rightarrow X$ and a feature map $\Phi:C([0,T],X)\rightarrow H$, we call $\langle v^{(k)}, \Phi(\gamma)\rangle_H$, where $v^{(k)}$ is an eigenfunction of $C_{\Gamma}$, a \emph{nonlinear principal component of $\Gamma$ at $\gamma$ corresponding to $\Phi$}. 
\end{definition}
While the principal components can be defined with respect to any feature map, the occupation kernels themselves provide a feature map that is convenient for analysis and implementation. The convenience stems from the fact that if the occupation kernels are selected to be the feature maps, the coefficients $\alpha^{(k)}_i$ of the eigenfunction $v^{(k)}$ of $C_{\Gamma}$ are given by normalized eigenvectors of the Gram matrix of occupation kernels.
\begin{proposition}
The mapping $\Phi(\gamma) = \Gamma_\gamma$ is a feature map from $C([0,T],X)$ to $H$. The eigenfunctions of $C_\Gamma$ under this feature map can be computed by solving $ \tilde{K}_\Gamma \alpha = \lambda M \alpha $ where $\tilde{K}_\Gamma$ is the \emph{centered} occupation kernel Gram matrix, given by
\[
    \tilde{K}_{\Gamma}=K-J_MK - KJ_M +J_MKJ_M
\]
where $(J_M)_{i,j}=\frac{1}{M}$ and 
$K = \left(\langle\Gamma_{\gamma_i},\Gamma_{\gamma_j}\right\rangle_H)_{i,j=1}^{M}$ is the original occupation kernel Gram matrix. In particular, if $\alpha^{(1)},\ldots,\alpha^{(N)}$, for $0 < N \leq M$, are eigenvectors of $\tilde{K}_{\Gamma}$, corresponding to nonzero eigenvalues $0<\lambda_1\leq \ldots, \leq \lambda_N$, normalized such that for $k=1,\ldots, N$, $\lambda_k \langle \alpha^{(k)}, \alpha^{(k)} \rangle_{\mathbb{R}^n} = 1$, then the $k-$th eigenfunction $v^{(k)}$ of $C_{\Gamma}$ can be expressed as 
\[
    v^{(k)}=\sum_{i=1}^M \alpha^{(k)}_i \Gamma_{\gamma_i}\in H.
\]
\end{proposition}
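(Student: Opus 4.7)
My plan is to closely mirror the derivation already performed for KPCA in Section \ref{Background}, replacing the finite-dimensional feature vectors $\Phi(x_j)$ with occupation kernels $\Gamma_{\gamma_j}$. First I would check that $\Phi(\gamma) := \Gamma_\gamma$ qualifies as a feature map in the sense of Definition 1: since $\Gamma_\gamma \in H$ by Definition \ref{def:occ}, the induced two-argument function $k(\gamma_i,\gamma_j) := \langle \Gamma_{\gamma_j}, \Gamma_{\gamma_i}\rangle_H$ on $C([0,T],X)\times C([0,T],X)$ is automatically a kernel in the sense required, with $H$ as its feature space. Proposition \ref{prop:integral-rep} gives the explicit representation
\[
    \langle \Gamma_{\gamma_i}, \Gamma_{\gamma_j}\rangle_H = \int_0^T\!\!\int_0^T K(\gamma_i(s),\gamma_j(t))\,\mathrm{d}s\,\mathrm{d}t,
\]
so the Gram matrix $K$ is finite, symmetric positive semidefinite, and computable directly from the underlying reproducing kernel.

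Next, in the centered case I would repeat the KPCA argument verbatim. Because $C_\Gamma = \tfrac{1}{M}\sum_j [\Gamma_{\gamma_j}\otimes \Gamma_{\gamma_j}]$ is a finite-rank operator whose range is contained in $\operatorname{span}\{\Gamma_{\gamma_1},\dots,\Gamma_{\gamma_M}\}$, any nonzero eigenfunction $v^{(k)}$ of $C_\Gamma$ lies in that span and therefore admits an expansion $v^{(k)} = \sum_i \alpha_i^{(k)} \Gamma_{\gamma_i}$. Substituting this expansion into $C_\Gamma v^{(k)} = \lambda v^{(k)}$, pairing both sides with each $\Gamma_{\gamma_r}$, and unfolding the rank-one operator $[\,\cdot\otimes\cdot\,]$ via its definition reproduces exactly the calculation leading to equations \eqref{eq:matrixFormFeatureMap}--\eqref{eq:matrixFormCoefficients}, yielding the matrix identity $M\lambda K\alpha = K^2\alpha$. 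Positive semidefiniteness of $K$ then reduces this to $K\alpha = M\lambda\alpha$. Finally, imposing $\|v^{(k)}\|_H = 1$ expands to $\sum_{i,j}\alpha_i^{(k)}\alpha_j^{(k)}K_{ij} = \lambda_k\langle \alpha^{(k)},\alpha^{(k)}\rangle = 1$, which is the stated normalization.

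The only remaining task is to handle the case where the family $\{\Gamma_{\gamma_j}\}$ is not centered in $H$. I would introduce the centered feature map $\tilde\Phi(\gamma) := \Gamma_\gamma - \tfrac{1}{M}\sum_{\ell=1}^M \Gamma_{\gamma_\ell}$, expand $\langle \tilde\Phi(\gamma_i),\tilde\Phi(\gamma_j)\rangle_H$ bilinearly, and identify the four resulting sums with the entries of $K$, $KJ_M$, $J_MK$, and $J_MKJ_M$ respectively, thereby recovering $\tilde K_\Gamma = K - J_M K - K J_M + J_M K J_M$. The argument of the previous paragraph then applies verbatim to $\tilde\Phi$ in place of $\Phi$. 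The main obstacle here is purely bookkeeping rather than mathematics: one must track indices carefully when verifying the four-term centering identity and confirm that the eigenfunction normalization translates correctly from $\|v^{(k)}\|_H^2 = \lambda_k M (\alpha^{(k)})^\top \alpha^{(k)}$ back to the stated scaling. Everything else is a direct transcription of the KPCA derivation, relying only on Definition \ref{def:occ}, Proposition \ref{prop:integral-rep}, and linearity of the inner product on $H$.
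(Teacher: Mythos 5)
Your proposal is correct and takes essentially the same route as the paper, whose own proof simply invokes the KPCA derivation after noting that $\langle\Phi(\gamma_i),\Phi(\gamma_j)\rangle_H=\langle\Gamma_{\gamma_i},\Gamma_{\gamma_j}\rangle_H$ and that $\tilde{K}_\Gamma=(J_M-I)K(J_M-I)$ is positive semi-definite; your writeup just makes explicit the steps (span argument, matrix reduction, centering expansion) that the paper leaves implicit. The only thing to tidy is the bookkeeping you already flag yourself: be consistent about whether $\lambda_k$ denotes an eigenvalue of $\tilde{K}_\Gamma$ or of $C_\Gamma$ (they differ by the factor $M$), since that decides whether the normalization reads $\alpha^{(k)\top}\tilde{K}_\Gamma\alpha^{(k)}=\lambda_k\,\alpha^{(k)\top}\alpha^{(k)}$ or $M\lambda_k\,\alpha^{(k)\top}\alpha^{(k)}$.
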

\begin{proof} The proof of the above proposition proceeds analogously to what is done in KPCA. We need only note that $
    \langle \Phi(\gamma_i),\Phi(\gamma_j)\rangle_H = \langle \Gamma_{\gamma_i},\Gamma_{\gamma_j}\rangle_H$
and that $\tilde{K}_{\Gamma} = (J_M-I)K(J_M-I)$ is positive semi-definite.
\end{proof}
\subsection{OKPCA for Fault Detection}
Here we will outline an interesting application of OKPCA to detect faulty trajectories based on Hoffman's reconstruction error \cite{SCC.Hoffmann2007}. 
\begin{definition}
Let $\gamma$ be a test trajectory, $\Gamma=\{\gamma_j:j=1,\ldots M\}$ be a collection of trajectories, $V=\{v^{(k)}: k = 1,\ldots N\}$ be a collection of eigenfunctions for $C_{\Gamma}$,  and $\Phi_0 = \frac{1}{M}\sum_{j=1}^M \Phi(\gamma_j)$ be the \emph{center} of $\Gamma$ in $H$. Letting $\tilde{\Phi}(\gamma)=\Phi(\gamma)-\Phi_0$ we can define the \emph{reconstruction error for $\gamma$ in $H$ with respect to $V$} by 
\begin{equation}\label{eq:reconstructionError}
    R(\gamma) = \|\tilde{\Phi}(\gamma)\|_H^2-\sum_{j=1}^N\langle{\tilde{\Phi}(\gamma), v^{(j)}\rangle_H^2}.
\end{equation}
\end{definition}

\begin{remark}
If the feature maps are selected to be the occupation kernels, the reconstruction error can be computed using integrals of the kernel function along the trajectory. Indeed, using the feature map $\Phi(\gamma)=\Gamma_\gamma$, we get
\begin{gather*}
\|\tilde{\Phi}(\gamma)\|^2_H=\left\langle \Phi(\gamma)-\sum_{j=1}^M \frac{\Phi(\gamma_j)}{M},\Phi(\gamma)-\sum_{j=1}^M \frac{\Phi(\gamma_j)}{M}\right\rangle_H\\
=\langle
\Gamma_{\gamma},\Gamma_{\gamma}\rangle_H - \sum_{j=1}^M\frac{2\langle \Gamma_\gamma,\Gamma_{\gamma_j}\rangle_H}{M} + \sum_{i,j=1}^M\frac{\langle
\Gamma_{\gamma_i},\Gamma_{\gamma_j}\rangle_H}{M^2}
\end{gather*}
 and for a given $k$ we have 
\begin{align*}&\langle \tilde{\Phi}(\gamma), v^{(k)}\rangle_H \\
&= \sum_{j=1}^M \alpha_j^{(k)}\left[\langle \Gamma_\gamma,\Gamma_{\gamma_j} \rangle_H - \frac{1}{M} \sum_{n=1}^M \langle \Gamma_{\gamma_n}, \Gamma_{\gamma_j}\rangle_H\right.\\
&-\left.\frac{1}{M}\sum_{\ell=1}^M \langle \Gamma_{\gamma}, \Gamma_{\gamma_\ell}\rangle_H+\frac{1}{M^2}\sum_{n,\ell=1}^{M}\langle \Gamma_{\gamma_{n}},\Gamma_{\gamma_\ell}\rangle_H\right].
\end{align*}
The reconstruction error can then be computed using the fact that given two trajectories $\gamma_i$ and $\gamma_j$, the inner product of the corresponding occupation kernels is given by
\[
    \langle \Gamma_{\gamma_i},\Gamma_{\gamma_j}\rangle_H = \int_0^T\int_0^T k(\gamma_i(\tau),\gamma_j(t)) \, \mathrm{d}\tau \mathrm{d}t.
\]
\end{remark}\medskip

\begin{remark}
Similar to KPCA, the OKPCA reconstruction error also has an interesting geometric interpretation. Note that the reconstruction error can be represented in the inner product form 
\[
R(\gamma) = \left\langle \tilde{\Phi}(\gamma) , \tilde{\Phi}(\gamma)- \sum_{j=1}^N \langle\tilde{\Phi}(\gamma), v^{(j)}\rangle_H v^{(j)}\right\rangle_H.
\] Hence, the reconstruction error is a measure of how well the projection of $\tilde{\Phi}(\gamma)$ onto $\text{span}\{v^{(j)}: j=1, \ldots, N\}$ recreates $\tilde{\Phi}(\gamma)$.
\end{remark}
Given a large enough set of normal trajectories, the reconstruction error can thus be used to detect faulty trajectories.
\begin{definition}
Let $\Gamma=\{\gamma_j:j=1,\ldots M\}$ be a collection of trajectories, called \emph{training data}. Let $V=\{v^{(k)}: k = 1,\ldots N\}$ denote the  \emph{principal component vectors}, i.e., a collection of eigenfunctions for $C_\Gamma$ corresponding to non-zero eigenvalues. Let $R_V(\gamma)$ be the reconstruction error  for a test trajectory $\gamma$ in $H$ with respect to $V$. For a given threshold $\varepsilon>0$, we will call a test trajectory \emph{$\varepsilon-$faulty} if $R_V(\gamma)>\varepsilon$. 
\end{definition}
\begin{remark}
This definition of fault is dependent on $N$, the number of principal component vectors being used to compute the reconstruction error, the selected kernel, and the threshold $\varepsilon$. The threshold $\varepsilon$ can be decided based on reconstruction errors evaluated at trajectories that are a part of the training data. For further remarks on the selection of the kernel and the number of principal component vectors, see the discussion section.
\end{remark}
% {\color{blue}
% \section{Random Idea}
% \begin{definition}
% The Mahalanobis distance of an observation $\vec{x}=(x_1,\ldots,x_n)^\top$ from a set of observations $O$ with mean given by $\vec{\mu}=(\mu_1,\ldots,\mu_n)^\top$ and covariance matrix $S$ is given as 
% \[
% D_M(\vec{x},O) 
% =\sqrt{(\vec{x}-\vec{\mu})^\top S^{-1}(\vec{x}-\vec{\mu})}
% =\left\|S^{-1/2}(\vec{x}-\vec{\mu})\right\|_{\mathbb{R}^n}
% \]
% \end{definition}
% {\bf Idea:} We could define a similar metric via our defined covariance matrix. With  
% \[C_{\Gamma}=\frac{1}{M}\sum_{j=1}^M[\Phi(\gamma_j)\otimes \Phi(\gamma_j)].\]
% We define 
% \[D(\vec{x},O)=\|C_\Gamma^{-1/2} (\Gamma_{\gamma}-\Gamma_{\mu})\|_H\]
% or maybe(?)
% \[D(\vec{x},O)=\|C_\Gamma^{-1/2} (\Gamma_{\gamma - \mu})\|_H\]
% where $\mu$ is an averaged trajectory. We can set a threshold distance to determine whether a trajectory is faulty. 

% }
\section{Experiments} \label{exp}
In the following, two numerical experiments are presented to illustrate the efficacy of the developed fault detection method. The first experiment is an academic one where the developed method is used to identify trajectories generated by a nonlinear system that is different from the one used to generate the training data.

In the second experiment, simulated trajectories of a quadrotor aircraft are used to train the algorithm. The trained algorithm is then used to identify trajectories generated by a faulty quadrotor, where the fault is introduced by changing control parameters.
\subsection{Description and Results}
\begin{experiment}\label{exp:academic}
In this experiment, 100 fault detection trials are performed. In each trial, the training data comprises of 100 trajectories of the system
\begin{align*}
    \dot{x}_{1} = -x_{1} + x_2 \sin\left(\frac{\pi x_1}{2}\right),
    \dot{x}_{2} = -x_{2} + x_1 \cos\left(\frac{\pi x_1}{2}\right)
\end{align*}
initialized from randomly selected initial conditions on the unit circle. To test the developed OKPCA fault detection method, the reconstruction error is evaluated at 20 trajectories of the same system and 20 trajectories of the \emph{faulty} system 
\begin{align*}
    \dot{x}_{1} = -x_{1} + 0.9 x_2 \sin\left(\frac{\pi x_1}{5}\right),
    \dot{x}_{2} = -x_{2} + 0.8 x_1 \cos\left(\frac{\pi x_2}{3}\right)
\end{align*}
also starting from random initial conditions on the unit circle. 

All trajectories are 2 seconds long and sampled every 0.01 seconds. The Gaussian radial basis function $k(x,y) = \mathrm{e}^{\frac{-\left\Vert x-y\right\Vert^2}{\mu}}$ is used as the kernel function with width parameter $\mu=0.6$ and $N=20$ eigenvectors are selected for the projection in \eqref{eq:reconstructionError}. The detection threshold is set to be equal to 2 times the highest reconstruction error seen in the training data, that is, $\varepsilon = 2\max_{i}\{R(\gamma_i)\}_{i=1}^M$. Normal test trajectories with reconstruction errors higher than the threshold are classified as false positives and faulty test trajectories with reconstruction errors smaller than the threshold are classified as false negative. To compare OKPCA and KPCA in a way that is independent of threshold selection, a \emph{mixing percentage} is computed. The mixing percentage is defined as the percentage of the test trajectories that fall within the band defined by the smallest reconstruction error among faulty trajectories and the largest reconstruction error among normal trajectories. The performance of OKPCA and KPCA for this test is summarized in the third column of Table \ref{tab:comparison}

\begin{table}
    \centering
    \bgroup
    \setlength\tabcolsep{2.5pt}
    \small
    \begin{tabular}{|c|c|ccc|ccc|ccc|}
        \hline
         $M$ & Method & \multicolumn{3}{c|}{No Noise (\%)} & \multicolumn{3}{c|}{Samp. Noise (\%)}  & \multicolumn{3}{c|}{Meas. Noise (\%)}   \\
        \hline
         & & FP & FN & MP & FP & FN & MP & FP & FN & MP  \\
         \hline 
         50 & OKPCA  & 11.5 & 0.1 & 2.6  & 10.2 & 0.2 & 4  & 11.2 & 0.7 & 11.9 \\
             & KPCA   & 10.6 & 0.1 & 1.9  & 15 & 0.3 & 8.1  & 11.9 & 1.8 & 23.1 \\
        \hline 
         100 & OKPCA  & 1.3 & 0.2 & 0.7  & 1.3 & 0.6 & 3.9  & 0.6 & 1.8 & 8.8 \\
             & KPCA   & 1.6 & 0.1 & 0.9  & 1.7 & 0.7 & 5.5  & 0.3 & 3.2 & 12.4 \\
        \hline
        150 & OKPCA  & 0.2 & 0.2 & 0  & 0.1 & 0.5 & 3.6  & 0.2 & 1.9 & 8.2 \\
            & KPCA   & 0.4 & 0.1 & 0.5  & 0.1 & 0.8 & 5.2  & 0.1 & 3.7 & 12.6 \\
        \hline
    \end{tabular}
    \egroup
    \caption{A comparison of OKPCA with KPCA for the system and fault models in Experiment \ref{exp:academic}. The initialisms FP, FN, and MP denote the false positive rate, the false negative rate, and the mixing percentage, averaged over 100 trials, respectively.}
    \label{tab:comparison}
\end{table}

Since the OKPCA method relies on integrals of trajectories, the data do not need to be equally spaced. To demonstrate the applicability of the OKPCA method to data sets with variable sampling rates, a sampling noise, uniformly distributed in the interval $[-0.004,0.004]$ is added to each sampling instant of the training data and the test data (i.e., the sampling rate is uniformly distributed between 0.002s and 0.01s). The performance of OKPCA and KPCA for this test is summarized in the fourth column of Table \ref{tab:comparison}.

\begin{figure}
    \centering
    \begin{tikzpicture}
    \begin{axis}[
        xlabel={$x_1$},
        ylabel={$x_2$},
        legend pos = outer north east,
        legend style={nodes={scale=0.5, transform shape}},
        enlarge y limits=0,
        enlarge x limits=0,
        height = 0.5\columnwidth,
        height = 0.5\columnwidth,
        label style={font=\scriptsize},
        tick label style={font=\scriptsize}
    ]
        \pgfplotsinvokeforeach{0,...,99}{
            \pgfmathtruncatemacro\result{100+#1}
            \ifthenelse{\equal{#1}{0}}{
                \addplot [green, mark=none] table[x index=#1, y index=\result] {data/Exp1TrainNoisy.dat};
                \addlegendentry{Training Data}
            }{
                \addplot [green, mark=none, forget plot] table[x index=#1, y index=\result] {data/Exp1TrainNoisy.dat};
            }
        }
        \pgfplotsinvokeforeach{0,...,19}{
            \pgfmathtruncatemacro\result{20+#1}
            \ifthenelse{\equal{#1}{0}}{
                \addplot [blue, mark=none] table[x index=#1, y index=\result] {data/NormalExp1TrainNoisy.dat};
                \addlegendentry{Normal Test Data}
            }{
                \addplot [blue, mark=none, forget plot] table[x index=#1, y index=\result] {data/NormalExp1TrainNoisy.dat};
            }
        }
        \pgfplotsinvokeforeach{0,...,19}{
            \pgfmathtruncatemacro\result{20+#1}
            \addplot [red, mark=none] table[x index=#1, y index=\result] {data/FaultyExp1TrainNoisy.dat};
            \ifthenelse{\equal{#1}{0}}{\addlegendentry{Faulty Test Data}}{}
        }
    \end{axis}
\end{tikzpicture}
    \caption{Noisy trajectories used as training data (green) and normal (blue) and faulty (red) test data to test degradation of performance in Experiment \ref{exp:academic}.}
    \label{fig:academicNoisyData}
\end{figure}

As opposed to PCA, which is generally not robust to noise \cite{SCC.Lu.Zhang.ea2004}, occupation kernel PCA, owing to integration of the trajectories, is expected to have inherent robustness to zero-mean measurement noise and sampling noise. To test this hypothesis, the 100 trials are repeated with $M=50$, $100$, and $150$ by adding Gaussian noise with standard deviation 0.01 to each measurement in the training data and the test data (see Fig. \ref{fig:academicNoisyData}). For comparison, the KPCA fault detection method from \cite{SCC.Hoffmann2007} is applied to the same data set with 20 eigenvectors and $\mu = 5$. The performance of OKPCA and KPCA for this test is summarized in the last column of Table \ref{tab:comparison}.

Fig. \ref{fig:academicGoodTrial} illustrates the results of one of the \emph{successful}  (no false positives or false negatives) noisy trials where it can be seen that the faulty test trajectories have a higher reconstruction error than the normal test trajectories. Fig. \ref{fig:academicBadTrial} illustrates the results of one of the \emph{unsuccessful} noisy trials where the decision boundary is not as clear as the successful trial.

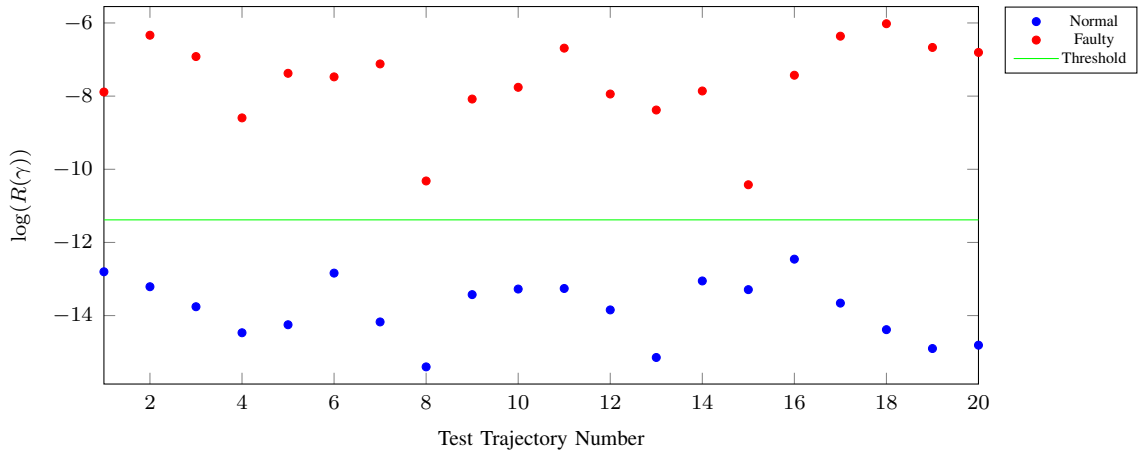
\begin{figure}
    \centering
    \begin{tikzpicture}
    \begin{axis}[
        xlabel={Test Trajectory Number},
        ylabel={$\log(R(\gamma))$},
        legend pos = outer north east,
        legend style={nodes={scale=0.5, transform shape}},
        enlarge y limits=0.05,
        enlarge x limits=0,
        height = 0.4\columnwidth,
        width = 0.8\columnwidth,
        label style={font=\scriptsize},
        tick label style={font=\scriptsize}
    ]
        \addplot [scatter, only marks, mark options = {blue, scale = 0.75}, scatter/use mapped color={draw=blue, fill=blue}] table [x index=0, y index=1] {data/Exp1BestTrial.dat};
        \addplot [scatter, only marks, mark options = {red, scale = 0.75}, scatter/use mapped color={draw=red, fill=red}] table [x index=0, y index=2] {data/Exp1BestTrial.dat};
        \addplot [green] table [x index=0, y index=3] {data/Exp1BestTrial.dat};
        \legend{Normal,Faulty,Threshold}
    \end{axis}
\end{tikzpicture}
    \caption{An example trial in Experiment \ref{exp:academic} where the faulty trajectories and the normal trajectories are well-separated by the reconstruction error and no false negative or false positive results are generated.}
    \label{fig:academicGoodTrial}
\end{figure}

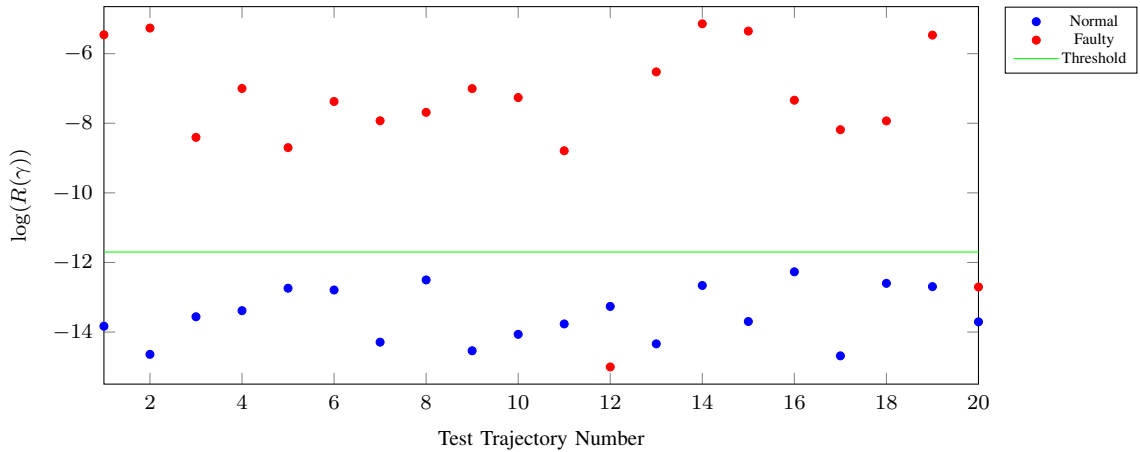
\begin{figure}
    \centering
    \begin{tikzpicture}
    \begin{axis}[
        xlabel={Test Trajectory Number},
        ylabel={$\log(R(\gamma))$},
        legend pos = outer north east,
        legend style={nodes={scale=0.5, transform shape}},
        enlarge y limits=0.05,
        enlarge x limits=0,
        height = 0.4\columnwidth,
        width = 0.8\columnwidth,
        label style={font=\scriptsize},
        tick label style={font=\scriptsize}
    ]
        \addplot [scatter, only marks, mark options = {blue, scale = 0.75}, scatter/use mapped color={draw=blue, fill=blue}] table [x index=0, y index=1] {data/Exp1WorstTrial.dat};
        \addplot [scatter, only marks, mark options = {red, scale = 0.75}, scatter/use mapped color={draw=red, fill=red}] table [x index=0, y index=2] {data/Exp1WorstTrial.dat};
        \addplot [green] table [x index=0, y index=3] {data/Exp1WorstTrial.dat};
        \legend{Normal,Faulty,Threshold}
    \end{axis}
\end{tikzpicture}
    \caption{An example trial in Experiment \ref{exp:academic} where a few of the faulty trajectories fall below the threshold, generating false negative results.}
    \label{fig:academicBadTrial}
\end{figure}
\end{experiment}
\begin{experiment}\label{exp:quadrotor}
In the second experiment, the fault detection capabilities of OKPCA are evaluated using trajectories generated by a quadrotor. A quadrotor model under a known PID controller is simulated in MATLAB. A simplified model of the quadrotor in the vehicle frame is used by neglecting the Coriolis force and assuming the pitch ($\theta$) and roll ($\phi$) angles are small (see Equations $35-40$ in \cite{SCC.Beard2008} for details). The model consists of $12$ state variables that include position $(x,y,z)$, velocity $(u,v,w)$, Euler angles $(\phi,\theta,\psi)$, and roll rates $(p,q,r)$ of the quadrotor.

\begin{figure}
    \centering
    \begin{tikzpicture}
    \begin{axis}[
        xlabel={Time [s]},
        ylabel={Position},
        legend pos = outer north east,
        legend style={nodes={scale=0.5, transform shape}},
        enlarge y limits=0.05,
        enlarge x limits=0,
        height = 0.4\columnwidth,
        width = 0.8\columnwidth,
        label style={font=\scriptsize},
        tick label style={font=\scriptsize}
    ]
        \pgfplotsinvokeforeach{1,...,3}{
            \addplot+ [thick, mark=none] table [x index=0, y index=#1]{data/Exp2_Normal_Major_TCompare.dat};
        }
        \pgfplotsset{cycle list shift=1}
        \pgfplotsinvokeforeach{4,...,6}{
            \addplot+ [thick, dashed, mark=none] table [x index=0, y index=#1]{data/Exp2_Normal_Major_TCompare.dat};
        }
         \legend{$x_{\mathrm{normal}}(t)$,$y_{\mathrm{normal}}(t)$,$z_{\mathrm{normal}}(t)$,$x_{\mathrm{faulty}}(t)$,$y_{\mathrm{faulty}}(t)$,$z_{\mathrm{faulty}}(t)$}
    \end{axis}
\end{tikzpicture}
    \caption{Example of a normal (solid) and a faulty (dotted) trajectory of the quadrotor in Experiment \ref{exp:quadrotor} under simulated major actuator fault.}
    \label{fig:NaATrajectory}
\end{figure}

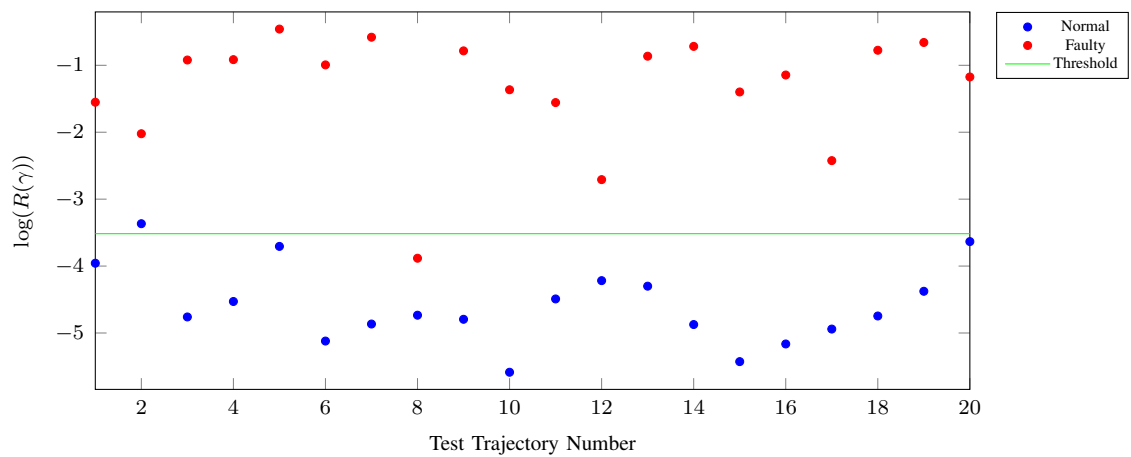
\begin{figure}
    \centering
    \begin{tikzpicture}
    \begin{axis}[
        xlabel={Test Trajectory Number},
        ylabel={$\log(R(\gamma))$},
        legend pos = outer north east,
        legend style={nodes={scale=0.5, transform shape}},
        enlarge y limits=0.05,
        enlarge x limits=0,
        height = 0.4\columnwidth,
        width = 0.8\columnwidth,
        label style={font=\scriptsize},
        tick label style={font=\scriptsize}
    ]
        \addplot [scatter, only marks, mark options = {blue, scale = 0.75}, scatter/use mapped color={draw=blue, fill=blue}] table [x index=0, y index=1] {data/Exp2MajorBestTrial.dat};
        \addplot [scatter, only marks, mark options = {red, scale = 0.75}, scatter/use mapped color={draw=red, fill=red}] table [x index=0, y index=2] {data/Exp2MajorBestTrial.dat};
        \addplot [green] table [x index=0, y index=3] {data/Exp2MajorBestTrial.dat};
        \legend{Normal,Faulty,Threshold}
    \end{axis}
\end{tikzpicture}
    \caption{Reconstruction error comparison for major actuator faults in Experiment \ref{exp:quadrotor}.}
    \label{fig:MajWNoise}
\end{figure}

\begin{figure}
    \centering
    \begin{tikzpicture}
    \begin{axis}[
        xlabel={Time [s]},
        ylabel={Position},
        legend pos = outer north east,
        legend style={nodes={scale=0.5, transform shape}},
        enlarge y limits=0.05,
        enlarge x limits=0,
        height = 0.4\columnwidth,
        width = 0.8\columnwidth,
        label style={font=\scriptsize},
        tick label style={font=\scriptsize}
    ]
    \pgfplotsinvokeforeach{1,...,3}{
        \addplot+ [thick, mark=none] table [x index=0, y index=#1]{data/Exp2_Normal_Minor_TCompare.dat};
    }
    \pgfplotsset{cycle list shift=1}
    \pgfplotsinvokeforeach{4,...,6}{
        \addplot+ [thick, dashed, mark=none] table [x index=0, y index=#1]{data/Exp2_Normal_Minor_TCompare.dat};
    }
     \legend{$x_{\mathrm{normal}}(t)$,$y_{\mathrm{normal}}(t)$,$z_{\mathrm{normal}}(t)$,$x_{\mathrm{faulty}}(t)$,$y_{\mathrm{faulty}}(t)$,$z_{\mathrm{faulty}}(t)$}
    \end{axis}
\end{tikzpicture}
    \caption{Example of a normal (solid) and a faulty (dotted) trajectory of the quadrotor in Experiment \ref{exp:quadrotor} under simulated minor actuator fault.}
    \label{fig:NaATrajectoryMinor}
\end{figure}

\begin{figure}
    \centering
    \begin{tikzpicture}
    \begin{axis}[
        xlabel={Test Trajectory Number},
        ylabel={$\log(R(\gamma))$},
        legend pos = outer north east,
        legend style={nodes={scale=0.5, transform shape}},
        enlarge y limits=0.05,
        enlarge x limits=0,
        height = 0.4\columnwidth,
        width = 0.8\columnwidth,
        label style={font=\scriptsize},
        tick label style={font=\scriptsize}
    ]
        \addplot [scatter, only marks, mark options = {blue, scale = 0.75}, scatter/use mapped color={draw=blue, fill=blue}] table [x index=0, y index=1] {data/Exp2MinorBestTrial.dat};
        \addplot [scatter, only marks, mark options = {red, scale = 0.75}, scatter/use mapped color={draw=red, fill=red}] table [x index=0, y index=2] {data/Exp2MinorBestTrial.dat};
        \addplot [green] table [x index=0, y index=3] {data/Exp2MinorBestTrial.dat};
        \legend{Normal,Faulty,Threshold}
    \end{axis}
\end{tikzpicture}
\caption{Reconstruction error comparison for minor actuator faults in Experiment \ref{exp:quadrotor}.}
   \label{fig:MinWNoise}
\end{figure}
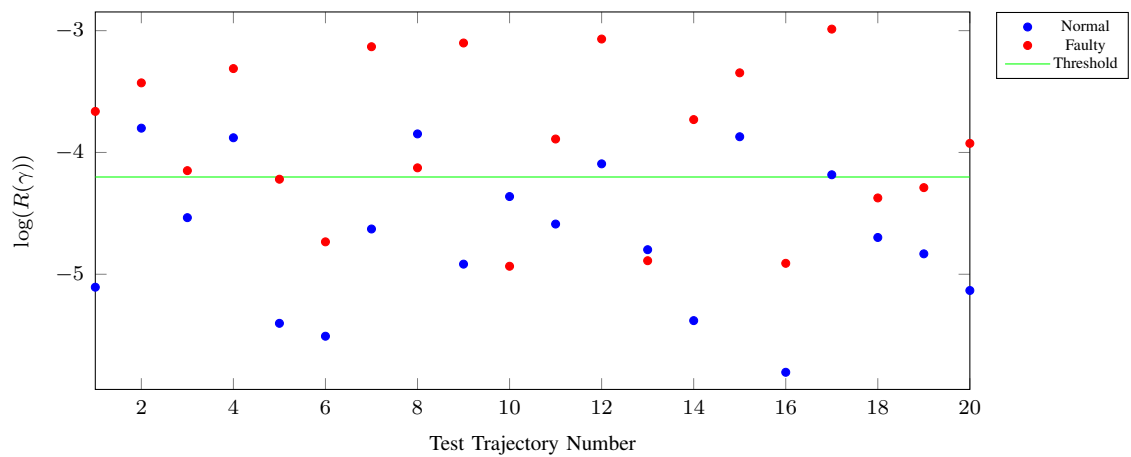

The controller used in the simulation is from Sections $7, 7.2,$ and $7.3$ in \cite{SCC.Beard2008}. Given a desired setpoint, the controller regulates the quadrotor to the setpoint by manipulating the velocity, pitch, and roll using three separate proportional-integral-derivative (PID) controllers. The control gains are identical for each of the three PID controllers. For the training data the proportional gain $K_P$, integral gain $K_I$, and derivative gain $K_D$ were selected to be $5,2$, and $8$, respectively.  For examples of noise-free normal and faulty trajectories for the major and the minor faults, see Figs. \ref{fig:NaATrajectory} and \ref{fig:NaATrajectoryMinor}, respectively.

The algorithm is trained on a data set consisting of $500$ trajectories of randomly generated lengths, sampled at approximately $5$ Hz. Irregular sampling rates and measurement noise are implemented similar to Experiment \ref{exp:academic}. Each trajectory is started from a random initial condition in the box with side length 2 centered at the origin in $\mathbb{R}^{12}$ and the quadrotor is commanded to fly to the origin. Actuator faults are simulated by altering the PID gains. To simulate major actuator faults, $20$ trajectories are generated using $K_P=15$, $K_D=2$, and $K_I=12$, and the minor actuator faults are simulated by generating another $20$ trajectories using $K_P=4$, $K_D=7$, and $K_I=3$.

The Gaussian radial basis function kernel with width parameter $\mu=10$ is used for OKPCA and $N=100$ eigenvectors are used for reconstruction. The reconstruction errors for the faulty trajectories are then compared with those corresponding to $20$ newly generated normal trajectories. Fig. \ref{fig:MajWNoise} and Fig. \ref{fig:MinWNoise} show the fault detection capabilities for trajectories generated with major and minor actuator faults, respectively. The fault detection threshold is set to be $\varepsilon_{major} = 2\max_{i}\{R(\gamma_i)\}_{i=1}^M$ for the major actuator tests and $\varepsilon_{minor} = \max_{i}\{R(\gamma_i)\}_{i=1}^M$ for the minor actuator tests.
\end{experiment}

\subsection{Discussion}

The experiments demonstrate the efficacy of OKPCA for data-driven fault detection applications. As noted in Experiment 1, in randomized trials, without any knowledge of the system model or the fault, OKPCA results in reconstruction errors that differentiate faulty trajectories from normal trajectories with less than 1\% false positive and false negative rates, with moderate degradation in performance when the data and the sampling rates are corrupted with noise. In addition to the practical advantages of OKPCA over KPCA listed in the introduction, Table \ref{tab:comparison} also indicates that in most experiments, OKPCA outperforms KPCA in the mixing percentage metric. The false positive and false negative rates depend on the selected threshold, and as such are not suitable for use as a metric for comparison.

The results of Experiment 2 indicate that OKPCA can detect faulty trajectories, irrespective of measurement noise and sampling noise. While major actuator faults are detectable with high confidence (Fig. \ref{fig:MajWNoise}), minor actuator faults were hard to detect (Fig. \ref{fig:MinWNoise}). Degradation of performance with decreasing severity of faults is expected in data-driven fault detection methods, especially in the presence of measurement noise.

Similar to Hoffman's observations in \cite{SCC.Hoffmann2007}, too small values of the kernel width $\mu$ result in the kernel functions that are near zero everywhere, rendering PCA meaningless. Too large values of $\mu$ result in a near-zero reconstruction error for all trajectories, faulty and normal. In Experiment 1, a large range of values of $\mu$, between 0.6 and 600, was found to yield similar performance. While large, the acceptable range of values of $\mu$  depends, in ways that are not well-understood, on density and number of trajectories in the training data. Selection of $\mu$ can be done using trial and error given a set of trajectories that are known to be faulty. The number of eigenvectors, $N$, needs to be selected large enough to ensure that the reconstruction errors are near zero when evaluated at trajectories in the training data. 

The results in Table \ref{tab:comparison} strongly indicate that larger data sets can result in fewer false positives when fault detection is performed using OKPCA. While the false negative rate is small, it shows no such trend. It should be noted that the errors in Table \ref{tab:comparison} are computed with the threshold in each trial selected as $\varepsilon = 2\max_{i}\{R(\gamma_i)\}_{i=1}^M$. The fact that the false positive rate drops to zero when a larger training data set is used implies that as the training data set gets larger the threshold $\epsilon$ could potentially be selected to be smaller. The authors hypothesize that, with a more judicious selection of the threshold, the decreasing trend in false positive rates, observed in Table \ref{tab:comparison}, can also be realized in the false negative rates, up to a limit, as the training data set gets larger. It should be noted, however, that OKPCA fault detection scales cubically in $M$, and as such, the use of large training data sets requires significant amount of computational resources. 
 
\section{Conclusion} \label{concl}
In this paper, the kernel PCA method is generalized to kernelized covariance operators on reproducing Kernel Hilbert spaces. The resulting OKPCA method generates principal components of a set of \emph{trajectories} as opposed to a set of points. It is shown that when occupation kernels are used as feature maps, the computations involved reduce to computation of single and double integrals of kernel functions along the trajectories in the training data and the test data. The developed OKPCA method is applied to the data-driven fault detection problem to separate normal trajectories of a dynamical system from faulty ones, without any knowledge of the system dynamics. Two numerical experiments demonstrate the efficacy of the developed technique.

The numerical experiments indicate that provided a training data set of known normal trajectories and a test data set of known faulty trajectories is available, the parameters of the developed OKPCA fault detection method can be selected by trial and error from a wide range of acceptable values. Performance improvement with increasing amount of training data is also observed, albeit accompanied by a significant rise in computation costs. The numerical experiments also indicate an inherent robustness to noise. A theoretical analysis of noise-robustness is out of the scope of this paper, and a part of future research. 

\bibliographystyle{ieeeTRAN}
\bibliography{sccmaster,extra,scc}

\end{document}